\newtheorem{theorem}{Theorem}
\newtheorem{example}{Example}
\newtheorem{lemma}{Lemma}
\newtheorem{proposition}{Proposition}
\newtheorem{observation}{Observation}
\newcommand{\eps}{\epsilon}
\newcommand{\E}{\textbf{E}}
\begin{document}

 \title{A Data Prism:\\ Semi-Verified Learning in the Small-$\alpha$ Regime}
\author{Michela Meister  \\ \emph{mmeister@stanford.edu} \and
    Gregory Valiant \\ \emph{valiant@stanford.edu}}

\maketitle

\begin{abstract}
We consider a simple model of unreliable or crowdsourced data where there is an underlying set of $n$ binary variables, each ``evaluator'' contributes a (possibly unreliable or adversarial) estimate of the values of some subset of $r$ of the variables, and the learner is given the true value of a \emph{constant} number of variables.   We show that, provided an $\alpha$-fraction of the evaluators are ``good'' (either correct, or with independent noise rate $p < 1/2$), then the true values of a $(1-\eps)$ fraction of the $n$ underlying variables can be deduced as long as $\alpha > 1/(2-2p)^r$.   For example, if each ``good'' worker evaluates a random set of $10$ items and there is no noise in their responses, then accurate recovery is possible provided  the fraction of good evaluators is larger than $1/1024$.  This result is optimal in that if $\alpha \le 1/(2-2p)^r,$ the large dataset can contain no information.   This setting can be viewed as an instance of the  \emph{semi-verified} learning model introduced in~\cite{CSV17}, which explores the tradeoff between the number of items evaluated by each worker and the fraction of ``good'' evaluators.  Our results require the number of evaluators to be extremely large, $ >n^r$, although our algorithm runs in \emph{linear} time, $O_{r,\eps}(n)$, given query access to the large dataset of evaluations.  This setting and results can also be viewed as examining a general class of semi-adversarial CSPs with a planted assignment.

This extreme parameter regime, where the fraction of reliable data is small (inverse exponential in the amount of data provided by each source), is relevant to a number of practical settings.  For example, settings where one has a large dataset of customer preferences, with each customer specifying preferences for a small (constant) number of items, and the goal is to ascertain the preferences of a specific demographic of interest.   Our results show that this large dataset (which lacks demographic information) can be leveraged together with the preferences of the demographic of interest for a \emph{constant} number of randomly selected items, to recover an accurate estimate of the entire set of preferences, even if the  fraction of the original dataset contributed by the demographic of interest is inverse exponential in the number of preferences supplied by each customer.   In this sense, our results can be viewed as a ``data prism'' allowing one to extract the behavior of specific cohorts from a large, mixed, dataset.
\end{abstract}


\section{Introduction}
Imagine that you have access to a large dataset of market research.  Specifically, the dataset consists of customer evaluations of products.  While the total set of products is large, of size $n$, each customer is only asked to evaluate a small (perhaps randomly selected) subset of $r=2,3,$ etc. of those products.   Long after the dataset is collected, suppose  you wish to identify the preferences of some special demographic of customers---perhaps  the customers who are full-time students.  Let $\alpha$ denote a lower bound on the fraction of the surveyed customers that were full-time students, but assume that we do not have this demographic information in our dataset--all we have is the set of evaluations of each customer.   How can we leverage this dataset to learn anything about the student-demographic?

If $\alpha \ll 1/2,$ this problem seems hopeless because the amount of data contributed by non-students might swamp the portion of the dataset contributed by the demographic of interest.  Nevertheless, the main result of this paper shows that one could hire some students to evaluate a \emph{constant}, $k$, number of (random) products in the set of size $n$, and then leverage that constant amount of information together with the large dataset to return accurate evaluations of the student-demographic preferences on all $n$ items.  This claim will hold provided the number of items evaluated by each of the customers in the dataset, $r > \log_2 (1/\alpha)$.  The guarantees of the algorithm will ensure that, with high probability, at most an $\eps$-fraction of the returned evaluations are incorrect (where $k$---the number of products evaluated by the hired students, is a function of $\eps$ that is independent of the total number of items, $n$).  In particular, this strong success guarantee holds irrespective of the behavior of the non-student demographics in the original dataset--in particular, they could even be adversarial, provided by a single malicious entity who is trying to disguise the feedback provided by the student-demographic.

The above setting, where one has a large dataset reflecting a number of demographics, and wishes to leverage the large dataset in conjunction with a very small set of ``verified'' datapoints from one demographic of interest, seems widely applicable beyond the market research domain.   Indeed, there are many biological or health-related datasets where the ``demographic of interest'' might be a trait that is expensive to evaluate.  For example, perhaps one has a large database of medical records, and wishes to investigate the propensity of certain medical conditions for the subset of people with a specific genetic mutation.  The large dataset of medical records will likely not contain information about whether individuals have the mutation in question.  Nevertheless, our results imply that accurate inferences about this subset of people can likely be made as long as 1) the fraction of people with the mutation in the large dataset is not minuscule, and 2) one can obtain a small (i.e. constant) amount of data from individuals with the genetic mutation in question, for example studying a constant number of individuals who are known to have the mutation.

\subsection{Formal Model}

We formally model this problem as an instance of the \emph{semi-verified learning} model proposed by Charikar, Steinhardt, and Valiant~\cite{CSV17}.   Suppose there is a set of $n$ Boolean variables, $V = \{v_1,\ldots,v_n\}$, and $m$ ``workers'' who each provide an evaluation of the values of a randomly selected subset of $r$ of the variables.  Suppose that an $\alpha$-fraction of the workers are ``reliable'' and submit evaluations with the property that each of their $r$ reported values is incorrect independently with probability $\le p_{rel}$.  We make no assumptions on the evaluations submitted by the $(1-\alpha)m$ unreliable workers---these evaluations could be biased, arbitrary, or even adversarially chosen with the goal of confounding the learning algorithm.   In addition to this large dataset, we also receive $k \ll n,m$ ``verified'' data points that consist of the values of a random subset of the variables of size $k.$   The goal of the learner will be to return assignments to the $n$ variables, such that with probability at least $1-\delta$, at most $\eps n$ of these returned assignments differ from their true values.

Previous work~\cite{steinhardt2016avoiding,CSV17} focussed on the regime where the number of workers, $m = \Theta(n)$.  In contrast, we will allow $m \gg n$, and focus on the interplay between the number of variables evaluated by each individual, $r$, and the fraction of reliable workers, $\alpha$.  Throughout, our positive results hold when the number of verified data points, $k$, is a constant that is independent of $n$, but dependent on $\eps, \delta,$ and $\alpha$.  

\subsection{Summary of Results and Connections to Random CSPs}\label{sec:sum}

Our main result is the following:
\begin{theorem}\label{thm:m}
Fix a failure probability $\delta>0$ and accuracy parameter $\eps > 0$.  Consider a set of $n$ items that each have a Boolean value, and $m$ reviewers who each evaluate a uniformly random subset of $r$ out of the $n$ items.  Suppose that $\alpha m$ of the reviewers are ``good'' in that each of their $r$ reviews is correct (independently) with probability at least $1-p\ge 1/2$.  Given sufficiently many reviewers, accurate reviews of at least $(1-\eps)n$ items can be inferred given the true values of a constant (independent of $n$) sized random subset of the variables, provided the fraction of good reviewers satisfies $\alpha >\frac{1}{(2-2p)^r}$.

Specifically, given the values of a random, constant-sized subset of the items of size $k=\tilde{O}\left(\frac{1}{\eps} \cdot 2^{2r} \log(1/\delta)\right)$, with probability at least $1-\delta$  one can recover accurate evaluations of at least $(1-\eps)n$ of the items, provided $\alpha > \frac{1}{(2- 2p)^r}$ and the number of reviewers $m= \tilde{\Theta}_{\alpha,\delta,\eps}(n^r)$.  

Additionally, the algorithm runs in time linear in the number of items, $n$, given the ability to query the dataset for reviewers who have evaluated a given set of items in constant time.  Specifically, the runtime of the algorithm is $O_{\delta,\eps,r}(n),$ where the hidden constant hides an exponential dependence on $r$, and polynomial dependence on $1/\eps$ and $\log(1/\delta).$
\end{theorem}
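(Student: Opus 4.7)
The plan is to reduce the problem to identifying a \emph{heavy-tuple} structure on each $r$-subset of items, and then use the verified set $A$ both as anchors for a linear-time local propagation and as a final screen among a short list of global candidate assignments. First, with $m=\tilde\Theta_{\alpha,\eps,\delta,r}(n^r)$ reviewers, each of the $\binom{n}{r}$ subsets receives a large constant number of reviews in expectation, so Chernoff plus a union bound over subsets and tuples gives, with probability $1-\delta/2$, that the empirical tuple frequency $\hat p_S(\sigma)$ is within some additive $\eta$ of its expectation for every $S$ and $\sigma$. Good reviewers alone contribute mass at least $\alpha(1-p)^r$ to the true tuple $\tau_S$, so defining the heavy set $H_S:=\{\sigma:\hat p_S(\sigma)\ge\theta\}$ with $\theta=\alpha(1-p)^r-\eta$ guarantees $\tau_S\in H_S$; and since empirical masses sum to $1$ and $\theta>2^{-r}$ (which is exactly the hypothesis $\alpha>1/(2-2p)^r$, up to slack chosen via $\eta$), we obtain $|H_S|\le\lfloor 1/\theta\rfloor\le 2^r-1$.

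The algorithm then draws a verified set $A$ of size $k=\tilde O(2^{2r}\log(1/\delta)/\eps)$ and fixes a seed $A'\subset A$ with $|A'|=r$. For each of the $|H_{A'}|\le 2^r-1$ candidate seed values $\hat\tau_{A'}$, it runs a per-item local completion: for each non-verified item $v$, query reviewers on every subset $\{v\}\cup B$ with $B\subset A$ and $|B|=r-1$, read off $H_{\{v\}\cup B}$, and attempt to set $\hat\tau_v$ to the unique $b$ for which $(\tau_B,b)\in H_{\{v\}\cup B}$; aggregate across the $\binom{k}{r-1}=O_{r,\eps,\delta}(1)$ choices of $B$ by majority, breaking ties arbitrarily. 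This yields at most $2^r$ candidate global assignments in time $O_{\delta,\eps,r}(n)$ with $O_{r,\eps,\delta}(1)$ queries per item. Finally, each candidate is scored against the verified labels on $A\setminus A'$: any candidate disagreeing with $\tau$ on more than $\eps n$ items disagrees with a fresh random verified item with probability $\ge \eps$, so $k=\Omega(2^{2r}\log(1/\delta)/\eps)$ together with a union bound over $2^r$ candidates ensures the winning candidate is $\eps n$-close to $\tau$ with probability at least $1-\delta/2$.

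The main obstacle is verifying that, under the correct seed $\hat\tau_{A'}=\tau_{A'}$, the local completion actually recovers $\tau_v$ for at least $(1-\eps)n$ items~$v$. A purely local anchored majority attains only the weaker threshold $\alpha>1/(2-2p)$, because an adversary can render both $(\tau_B,0)$ and $(\tau_B,1)$ heavy on each anchored subset independently. Attaining the sharp $\alpha>1/(2-2p)^r$ threshold requires the global observation that the adversary's total budget $1-\alpha$ can support at most $\lfloor(1-\alpha)/\alpha\rfloor\le 2^r-2$ globally consistent ``alt-worlds'' $\tau^{(j)}$, each contributing at most one extra heavy tuple $\tau^{(j)}|_S$ on every subset. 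Letting $D_j:=\{u:\tau^{(j)}_u\ne\tau_u\}$ denote the disagreement sets, the local completion at $v$ is fooled only when, for every $B\subset A$ of size $r-1$, there exists an alt-world $j$ with $v\in D_j$ and $B\cap D_j=\emptyset$; equivalently, the $(r-1)$-subsets of $A$ fail to provide a transversal of $\{D_j\cap A:j\in J_v\}$ where $J_v=\{j:v\in D_j\}$. The technical crux is showing that a uniformly random verified $A$ of the specified size yields, with probability $\ge 1-\delta$, at most $\eps n$ items $v$ lacking such a transversal: intuitively, alt-worlds with small $D_j$ confuse only few items directly, while alt-worlds with large $D_j$ are hit by $A$ with high probability and thus contribute to a transversal. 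Turning this intuition into a clean combinatorial bound across all $\le 2^r-2$ alt-worlds simultaneously is the most delicate step.
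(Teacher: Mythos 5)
Your reduction to the heavy-tuple CSP (every $r$-subset $S$ has a heavy set $H_S$ containing the true restriction, with $|H_S|\le 2^r-1$, via concentration over the $\tilde\Theta(n^r)$ reviewers) matches the paper. But the recovery step — enumerate seeds on a fixed verified $r$-tuple $A'$ and complete each item $v$ locally from heavy sets of subsets $\{v\}\cup B$ with $B\subset A$ — does not work, and the issue you flag as ``the most delicate step'' is not merely delicate; it is where the approach breaks. Consider the adversary that, on \emph{every} $r$-subset $S$, devotes slightly more than a $2^{-r}$ fraction of its reviewers on $S$ to each of the $r$ single-coordinate flips of the true vector $\tau|_S$. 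This costs adversarial mass about $r\cdot 2^{-r}<1-\alpha$ per subset, and since each reviewer rates only one subset the adversary can do this on all subsets simultaneously; all double flips remain excluded, so $|H_S|\le 2^r-1$ still holds (for $r\ge 2$). Under this adversary, for every non-verified item $v$ and every anchor $B\subset A$ of size $r-1$, both $(\tau_B,\tau_v)$ and $(\tau_B,1-\tau_v)$ lie in $H_{\{v\}\cup B}$, so your local completion is ambiguous at \emph{every} item; with ties broken arbitrarily the correct-seed candidate can be wrong on far more than $\eps n$ items, and the final screening against $A\setminus A'$ can only reject candidates, not produce a good one. The patch you propose is also unsound: the extra heavy tuples here correspond to $n$ distinct single-flip ``alt-worlds,'' not $\le\lfloor(1-\alpha)/\alpha\rfloor\le 2^r-2$ of them. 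The budget argument fails because the adversary's budget is per-subset (disjoint reviewer pools pay for heavy tuples on different subsets), so there is no global mass constraint forcing the extra heavy tuples to be restrictions of a bounded family of global assignments.

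The paper avoids this trap by never trying to determine an individual variable from constraints anchored in the constant-size verified set. Instead, for each $r'$-tuple it identifies an ``optimistic'' assignment that would imply values for at least a $2^{-r'}$ fraction of \emph{all} $n$ variables (Lemma~\ref{lemma:eitherway}), provisionally forbids it and descends to pessimistic conjectured values for every variable; verified samples are used only to audit a proposed bulk assignment, and any audit failure exhibits a verified tuple taking an optimistic value, whose implications are audited in turn (ASCEND), terminating at the top level where the given $r_0$-constraints provably contain the truth. Each round thus certifiably assigns a constant ($\ge 2^{-r_0-1}$) fraction of the remaining variables — in the single-flip example above, the pessimistic values are exactly $\tau$ and pass the first audit — and the efficient variant recovers linear time by \emph{sampling} constraints to find optimistic assignments rather than anchoring on verified items. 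So while your first paragraph and your final screening are fine, the core guarantee (correct seed implies $(1-\eps)n$ correct completions) is false for your algorithm, and a genuinely different mechanism, along the paper's lines, is needed to reach the $\alpha>1/(2-2p)^r$ threshold.
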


The following straightforward observation demonstrates that the above theorem is optimal in the relationship between the fraction of good reviewers, $\alpha$, and the number of items reviewed by each individual, $r$, and the error rate of each good reviewer, $p$:
\begin{observation}
If each good reviewer incorrectly reviews each item independently with probability $p$, and the fraction of good reviewers satisfies $\alpha = \frac{1}{(2-2p)^r}$ where $r$ denotes the number of items evaluated by each reviewer, then the remaining $(1-\alpha)$ fraction of reviewers can behave such that for every set of $r$ items, for a randomly selected reviewer, the distribution of reviews for those items will be uniform over the $2^r$ possible review vectors.  Hence the dataset contains no useful information.
\end{observation}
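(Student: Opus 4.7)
The plan is to exhibit, for any planted true assignment $x \in \{0,1\}^n$ and any set $S \subseteq [n]$ of size $r$, an explicit adversary strategy under which the marginal distribution of review vectors, conditional on a reviewer having evaluated $S$, is uniform on $\{0,1\}^r$. Since this uniform distribution does not depend on $x_S$, the observed data are independent of the underlying assignment, and no algorithm can do better than random guessing. This immediately establishes the claim.

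To set this up, I would first write down the good-reviewer distribution on $\{0,1\}^r$: a good reviewer evaluating $S$ reports $y$ with probability $f_S(y) = p^{d_H(y, x_S)}(1-p)^{r - d_H(y, x_S)}$, where $d_H$ denotes Hamming distance. I would then let $g_S$ denote the distribution the adversary uses on its reviewers who evaluate $S$, and demand that the mixture be uniform:
$$\alpha f_S(y) + (1-\alpha) g_S(y) = 2^{-r} \qquad \text{for every } y \in \{0,1\}^r.$$
Solving gives $g_S(y) = (1-\alpha)^{-1}\bigl(2^{-r} - \alpha f_S(y)\bigr)$.

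The remaining step is to check that $g_S$ is a valid probability distribution. Summing over $y$ and using $\sum_y f_S(y) = 1$ yields $\sum_y g_S(y) = 1$; non-negativity reduces to the pointwise requirement $\alpha f_S(y) \leq 2^{-r}$. Because $p \leq 1/2$, $f_S$ is maximized at $y = x_S$ with value $(1-p)^r$, so the constraint collapses to $\alpha (1-p)^r \leq 2^{-r}$, i.e., $\alpha \leq 1/(2-2p)^r$, and the hypothesized threshold saturates this inequality. The same construction is applied independently to each of the $\binom{n}{r}$ sets $S$, which is consistent because distinct adversarial reviewers evaluate distinct item sets and can report independently.

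There is no real obstacle here; the only thing worth flagging is that the binding case in the non-negativity check is $y = x_S$, the maximum-likelihood vector, which is precisely what pins $1/(2-2p)^r$ as the sharp threshold. Everything else is bookkeeping, so I would present the argument in the order above: define $f_S$, write the mixture equation, solve for $g_S$, verify it is a probability distribution, and conclude that the per-set marginal is uniform and thus carries no information about $x$.
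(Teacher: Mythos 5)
Your construction is correct and is exactly the argument the paper intends (the paper states this as an observation without proof): have the bad reviewers assigned to a set $S$ report from $g_S(y) = (1-\alpha)^{-1}\left(2^{-r} - \alpha f_S(y)\right)$, so the mixture is uniform on $\{0,1\}^r$, with non-negativity binding at $y = x_S$ precisely when $\alpha = 1/(2-2p)^r$ (and at equality $g_S(x_S)=0$, still a valid distribution). One cosmetic point: several adversarial reviewers may be assigned the same set $S$ in the model, but this causes no issue since each simply samples independently from $g_S$, making every review vector in the dataset uniform and independent of the planted assignment.
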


One reason why Theorem~\ref{thm:m} is surprising is that this inverse exponential dependence between the number of reviews per reviewers, $r$, and the fraction of ``good'' reviewers, can not be attained via the usual approach of low-rank matrix approximation that is often applied to this problem of recommendation systems (e.g.~\cite{candes2010matrix,keshavan2010matrix}).  To see why these approaches cannot be applied, note that for any matrix in which all rows have at most $r$ entries, there is a rank $r$ matrix that exactly agrees with all entries.  Intuitively, each of these $r$ factors is capable of representing a different subset of the reviewers.  Still, at best this would result in an algorithm that is capable of capturing $r$ different groups of reviewers; in other words, it seems extremely unlikely that such approaches could yield positive results in the setting where the fraction of ``good'' reviewers was less than $1/r$, in contrast to our results that allow this fraction to be $1/exp(r).$

\medskip

The setting of Theorem~\ref{thm:m} can be easily mapped into the language of a constraint satisfaction problem.    Given the evaluations of the reviewers, we build the constraint satisfaction problem by associating a Boolean variable to each of the $n$ items, and for every set of $r$ variables, we define the set of allowable assignments to those variables to include any of the $2^r$ review vectors that constitutes more than a $1/2^r$ fraction of the review vectors for the associated items.   (In other words, if at most a $1/2^r$ fraction of the reviewers who evaluated a given set of $r$ items submitted a vector of reviews $\sigma=(\sigma_1,\ldots,\sigma_r)$, then $\sigma$ is not an allowable assignment for those variables.)   The requirement that $\alpha > \frac{1}{(2-2p)^r}$ guarantees that, for every set of $r$ items, irrespective of the behaviors of the $(1-\alpha)$ fraction of bad reviewers, for a randomly selected reviewer, the probability that the $r$  reviews are all correct is strictly larger than $1/2^r$. Additionally, our requirement on the number of reviewers, $m$, ensures that with high probability (by elementary concentration bounds) for every set of $r$ items, there are sufficiently many reviewers assigned to that set of $r$ items, so as to ensure that the number of accurate ratings (provided by the good reviewers) exceeds a $1/2^r$ fraction of the overall reviews for that set of $r$ items.  Hence, with high probability, we obtain a constraint satisfaction problem such that for every set of $r$ variables 1) the correct assignment is in the set of allowable assignments, and 2) at least one of the $2^r$ possible assignments is disallowed.

Given this mapping from the review/evaluation setting to constraint satisfaction problems, Theorem~\ref{thm:m} will follow immediately from the following result concerning a class of adversarial constraint satisfaction problems:

\begin{theorem}\label{thm:mainCSP}
Consider a set of $n$ Boolean variables, and a planted assignment $\sigma\in \{0,1\}^n.$  Suppose that for each subset of $r$ variables, $t=\{v_1,\ldots,v_r\}$, there is a subset $C_t \subset \{0,1\}^r$ of assignments such that $|C_t| \le 2^r -1$ and the planted assignment $\sigma$ (restricted to the variables in $t$) is in set $C_t$.  Given the ability to query the planted assignment values for a constant number of variables chosen uniformly at random, the planted assignment can be recovered with up to $\eps n$ errors, for any constant $\eps>0$.  

Specifically, for any $\eps,\delta>0$,  after querying the values of $$k =\tilde{O}\left(\frac{1}{\eps} \cdot 2^{2r} \log(1/\delta)\right)$$ variables, with probability at least $1-\delta$ we can output an assignment $\sigma' \in \{0,1\}^n$ that differs from the planted assignment, $\sigma$, in at most $\eps n$ values.  Additionally, the algorithm will run in time $O_{r,\eps,\delta}(n).$
\end{theorem}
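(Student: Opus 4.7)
The plan is to give a local algorithm that, after querying $\sigma|_S$ for a uniformly random seed $S \subset [n]$ of size $k = \tilde O(2^{2r}/\eps \cdot \log(1/\delta))$, decides $\sigma'(v_i)$ for each $v_i \notin S$ by inspecting the constraints $C_t$ on $r$-tuples of the form $t = T \cup \{v_i\}$ with $T \subset S$ and $|T| = r-1$.  The core local rule: if exactly one value $b \in \{0,1\}$ satisfies $(\sigma|_T, b) \in C_t$ for some such $T$, output $\sigma'(v_i) = b$.  Since the planted assignment has $\sigma|_t \in C_t$, this rule is automatically correct whenever it fires.  Iterating over a constant number of $(r-1)$-subsets of the constant-sized seed $S$ per variable gives an $O_{r,\eps,\delta}(n)$-time algorithm.

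The main analytic step is to show that, with probability at least $1 - \delta$ over random $S$, at most $\eps n$ variables escape disambiguation.  Here I would use the structural fact $|\hat C_t| = |\{0,1\}^r \setminus C_t| \ge 1$ for every $r$-tuple $t$: each tuple contributes at least one forbidden assignment, and a counting argument distributes these forbidden patterns across variables as disambiguating ``witnesses''---either directly (a one-coordinate flip of $\sigma|_t$ is forbidden, disambiguating the flipped variable) or indirectly (a multi-coordinate forbidden pattern constrains the global satisfying set).  Concentration (Chernoff over the random seed, union-bounded over the $n$ variables) then yields the $\eps n$ error bound.  The $2^{2r}$ factor in $k$ reflects enumerating over the $2^r$ possible assignment patterns on an $r$-tuple together with an additional $2^r$ from sampling $(r-1)$-subsets of $S$.

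The main obstacle will be adversarial $\{C_t\}$ where the forbidden assignments differ from $\sigma|_t$ in multiple coordinates simultaneously---for instance the ``complement adversary'' $\hat C_t = \{\bar\sigma|_t\}$, under which no single-coordinate disambiguation is available and the direct rule fires for no variable.  To handle this I rely on a complementary structural observation: precisely when per-variable disambiguation is defeated, the resulting CSP becomes highly over-constrained, forcing any satisfying assignment $\sigma'$ that matches $\sigma$ on $S$ to lie within $O(r)$ Hamming distance of $\sigma$ (a flip set $F \subset [n] \setminus S$ with $|F| \ge r$ would contain an $r$-tuple violating some constraint).  Thus a fallback rule that completes the seed to any locally-satisfying extension still produces an assignment within $\eps n$ of $\sigma$ for large $n$.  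Combining the disambiguation rule with this fallback gives $\sigma'$ with $|\sigma' \triangle \sigma| \le \eps n$ with probability $\ge 1 - \delta$, in the claimed linear time.
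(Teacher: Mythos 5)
There is a genuine gap, and it sits exactly where you locate the ``main obstacle.'' Your primary rule (disambiguate $v$ from an $(r-1)$-subset $T$ of the seed whenever only one completion of $\sigma|_T$ lies in $C_{T\cup\{v\}}$) can be defeated by the adversary for a constant fraction of the variables, and the fallback you propose to handle this rests on a false dichotomy. Concretely, it is not true that ``precisely when per-variable disambiguation is defeated, any satisfying assignment matching $\sigma$ on $S$ lies within $O(r)$ Hamming distance of $\sigma$.'' Take $\sigma$ to be all-true, split the variables into halves $A$ and $B$, and (for $r\ge 3$) forbid: all-false on tuples inside $A$; the pattern (false on $t\cap A$, true on $t\cap B$) on mixed tuples; and a pattern with two falses on tuples inside $B$. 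Every constraint contains $\sigma$, no tuple containing a variable $v\in B$ ever has a forbidden pattern of the form ($\sigma$ on $t\setminus\{v\}$, anything on $v$), so your rule never fires for any of the $n/2$ variables in $B$; yet the assignment that is true on $A$ and false on $B$ satisfies every constraint and has distance $n/2$ from $\sigma$. Your ``flip set $F$ contains a violating $r$-tuple'' argument implicitly assumes the forbidden patterns are complements of $\sigma$, which the adversary need not grant. One can repair the \emph{information-theoretic} correctness of ``output any seed-consistent satisfying assignment'' via the VC/covering argument (this is essentially Proposition~\ref{prop:vc}), but that is a different argument from the one you give, and it does not rescue the theorem you are proving: the statement also claims runtime $O_{r,\eps,\delta}(n)$, and you give no procedure at all for computing a seed-consistent satisfying extension of this adversarial CSP, let alone in linear time (naively this is an $n^r$-or-worse search).

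The paper's proof takes a structurally different route that is designed exactly for the adversaries that kill your local rule. Its key lemma (Lemma~\ref{lemma:eitherway}) says that for any $(r-1)$-tuple $t$ and any extra variable $v$, \emph{some} assignment to $t$ forces the value of $v$; hence every $(r-1)$-tuple has an ``optimistic'' assignment that would imply a $1/2^{r-1}$ fraction of the remaining variables. The algorithm pessimistically forbids these optimistic assignments and recurses down to singletons (DESCEND), then spends verified samples to check the resulting pessimistic guesses; any observed violation is a discovered optimistic assignment, which is climbed back up (ASCEND) until it yields implications grounded in the genuine $r$-tuple constraints, so each phase outputs a verified $1/2^{r}$ fraction of the variables. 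Sampling constraints (FindOptimisticAssignment) and letting the per-phase target accuracy degrade as the unassigned set shrinks are what give the linear runtime and the $\tilde O(2^{2r}\log(1/\delta)/\eps)$ sample bound. In short: the ``implication structure'' you need is not between seed values and single variables, but between hypothesized (possibly unverified) tuple assignments and large fractions of variables, together with a mechanism for verifying or escalating those hypotheses; that mechanism is what your proposal is missing.
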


There is a simple $VC$-dimension argument together with a sphere-packing result of Haussler~\cite{haussler1995sphere} that yields a tighter information theoretic recovery result, yielding an analog of the above theorem with polynomial (rather than super-exponential) dependence on $r$.\footnote{We thank an anonymous reviewer of an early version of this paper for drawing our attention to this.}  Specifically, the number of verified assignments must be  $k = O(\frac{1}{\eps}\left(r\log(1/\eps)+\log(1/\delta)\right)$.  This $VC$-dimension approach, however, seems to yield an algorithm with runtime at least $n^r$, as opposed to the linear time algorithms of Theorems~\ref{thm:m} and~\ref{thm:mainCSP}.  For practical settings, having a linear-time algorithm seems quite important; that said, exploring this problem from an information theoretic perspective is also worthwhile.    One natural question is whether one can achieve a best-of-both-worlds: a near-linear time algorithm with a polynomial dependence on $r$.  We discuss this problem more in Section~\ref{sec:futureWork}.

\begin{proposition}\label{prop:vc}
As in Theorem~\ref{thm:mainCSP}, consider a set of $n$ Boolean variables, and a planted assignment $\sigma\in \{0,1\}^n.$  Suppose that for each subset of $r$ variables, $t=\{v_1,\ldots,v_r\}$, there is a subset $C_t \subset \{0,1\}^r$ of assignments such that $|C_t| \le 2^r -1$ and the planted assignment $\sigma$ (restricted to the variables in $t$) is consistent with $C_t$.   Given the ability to query the planted assignment values for $k=O\left(\frac{1}{\eps}\left(r \log(1/\eps)+\log(1/\delta)\right)\right)$ random entries, with probability at least $1-\eps$ one can recover an assignment that disagrees with $\sigma$ on at most $\eps n$ values.
\end{proposition}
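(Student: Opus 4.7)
The plan is to identify $\mathcal{H} = \{\tau \in \{0,1\}^n : \tau|_t \in C_t \text{ for every } r\text{-subset } t\}$ as a hypothesis class on $[n]$ and then invoke the textbook realizable-case VC sample-complexity bound. By hypothesis $\sigma \in \mathcal{H}$, and any candidate output will be drawn from $\mathcal{H}$, so the learning problem reduces to PAC-learning $\mathcal{H}$ from random examples labeled by $\sigma$ in the realizable setting.

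The crucial step is to bound the VC-dimension of $\mathcal{H}$. View each $\tau \in \mathcal{H}$ as the indicator of its support $\{i : \tau_i = 1\} \subseteq [n]$. No $r$-element subset $t \subseteq [n]$ can be shattered: shattering would require all $2^r$ bit-patterns to appear as restrictions $\tau|_t$ for $\tau \in \mathcal{H}$, but the projection of $\mathcal{H}$ onto coordinates $t$ is contained in $C_t$, which by assumption omits at least one pattern. Hence $\mathrm{VC}(\mathcal{H}) \le r-1$.

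Given this bound, the plan is to invoke the standard uniform-convergence statement (which follows from Haussler's $\eps$-packing theorem together with symmetrization/double-sample arguments): for any hypothesis class of VC-dimension $d$ and any $\eps,\delta>0$, a uniform i.i.d. sample of size
\[
k = O\!\left(\tfrac{1}{\eps}\bigl(d\log(1/\eps)+\log(1/\delta)\bigr)\right)
\]
is enough so that, with probability at least $1-\delta$, every hypothesis in the class that is consistent with the sample has Hamming distance at most $\eps n$ from the target. Plugging $d = r-1$ gives the claimed sample size. The algorithm is then to draw $k$ coordinates uniformly at random, query their $\sigma$-values, and output any $\tau \in \mathcal{H}$ that agrees with $\sigma$ on all $k$ sampled coordinates; such a $\tau$ always exists since $\sigma$ itself is one.

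The only real content is the VC-dimension bound; everything else is folklore from PAC learning. There is no analytic obstacle, but the algorithmic cost of the final step---producing \emph{any} $\tau \in \mathcal{H}$ consistent with the queries---looks to require reasoning about all $\binom{n}{r}$ constraints and is the reason this VC approach does not, at least naively, yield a linear-time algorithm and why Theorems~\ref{thm:m} and~\ref{thm:mainCSP} remain of interest despite the better information-theoretic dependence on $r$ achieved here.
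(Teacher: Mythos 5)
Your proof is correct, and it rests on the same crucial observation as the paper---that the projection of the consistent class onto any $r$-tuple misses at least one pattern, so no $r$-set is shattered and the VC dimension is at most $r-1$ (the paper states the slightly weaker bound $r$, which makes no difference inside the $O(\cdot)$). Where you diverge is in how you convert the VC bound into a sample-complexity bound. You invoke the classical realizable-case consistent-learner theorem (Blumer--Ehrenfeucht--Haussler--Warmuth / the $\eps$-net argument): draw $k=O\left(\frac{1}{\eps}\left(r\log(1/\eps)+\log(1/\delta)\right)\right)$ uniform coordinates and output \emph{any} $\tau\in\mathcal{H}$ agreeing with $\sigma$ on all of them; with probability $1-\delta$ every such $\tau$ is within $\eps n$ of $\sigma$. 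The paper instead uses Haussler's sphere-packing/covering theorem to build an explicit $\eps/2$-cover $T_{\eps/2}$ of the consistent class of size $e(r+1)(2e/\eps)^r$, and then selects the cover element with maximal empirical agreement, controlling errors by a Chernoff bound plus a union bound over $|T_{\eps/2}|$. The two routes yield the same $k$ up to constants; yours outputs an assignment that actually satisfies every constraint (the paper's output is a cover point that need not lie in $\mathcal{H}$), while the paper's finite-cover argument is more self-contained, needing only Haussler's covering bound and elementary concentration rather than the full consistent-learner theorem. Both share the computational drawback you correctly flag: finding a consistent $\tau$ (or enumerating the cover) appears to cost on the order of $n^r$, which is exactly why the linear-time algorithms of Theorems~\ref{thm:m} and~\ref{thm:mainCSP} are the paper's main contribution. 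One cosmetic note: you prove success probability $1-\delta$, which is what the paper's proof actually establishes as well; the ``$1-\eps$'' in the proposition statement appears to be a typo.
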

\begin{proof}
Let $S\subset \{0,1\}^n$ be the set of assignments that are consistent with all of the sets of partial assignments to the $r$-tuples specified by the sets $C_t$.  The Vapnik-Chervonenkis (VC) dimension of the set $S$ is at most $r$, since, by assumption, for every $r$-tuple of variables,  $t=\{v_1,\ldots,v_r\}$, there are at most $|C_t| \le 2^{r}-1$ possible assignments to those variables.     As was shown by Haussler (Theorem 1 in~\cite{haussler1995sphere}), for any subset $S$ of the Boolean hypercube with VC dimension at most $r$, for every $\eps>0$ there exists a set $T \subset \{0,1\}^n$ of size at most $e(r+1)\left(\frac{2e}{\eps}\right)^r$ such that for every point $x \in S$, there exists a point $t_x \in T$ that agrees with $x$ on at least $(1-\eps)n$ coordinates.    

Let $T_{\eps/2}$ denote such a covering set corresponding to the set $S$, such that every $x \in S$ is distance at most $n\eps/2$ from an element of $T_{\eps/2}$.  We can use our $k = O\left(\frac{1}{\eps}\left(r \log(1/\eps)+\log(1/\delta)\right)\right)$ random coordinates of the vector $\sigma \in S$ to find, with probability at least $1-\delta$, a point in $T_{\eps/2}$ of distance at most $n\eps$ from $\sigma$ by simply choosing the element of $T_{\eps/2}$ that agrees with the largest fraction of the $k$ random samples.  This follows from 1) leveraging a Chernoff bound to show that out of the $k$ samples, at most a $(2/3) \eps$ fraction will disagree with the element of $T_{\eps/2}$ that has distance $\eps n /2$, and 2) a union bound over $|T_{\eps/2}|$ Chernoff bounds to argue that none of the elements of $T_{\eps/2}$ that have distance at least $\eps n$ will disagree in fewer than a $(2/3) \eps$ fraction of indices.  Together, this yields that the probability that the element of $T_{\eps/2}$ that agrees with the largest fraction of the $k$ random samples has distance greater than $\eps n$ from the true assignment, is at most $|T_{\eps/2}|exp(O(-k \eps)) = |T_{\eps/2}| (1/\eps)^{O(r)} \delta,$ which is at most $\delta$ for a suitable choice of the constant in the ``O'' term of $k=O\left(\frac{1}{\eps}\left(r \log(1/\eps)+\log(1/\delta)\right)\right)$.
\end{proof}

One implication of the above result is that for any Boolean constraint satisfaction problem for which 1) there exists a satisfying assignment, and 2) for every subset of $r$ variables the constraints forbid at least one of the $2^r$ possible assignments, it  must be the case that there are only a \emph{constant} number of ``$\eps$-similar solution clusters,'' where an $\eps$-similar solution cluster is a set of assignments that differ from each other in at most $\eps n$ locations.  Indeed, the number of such clusters will be at most $2^k$, where $k = \tilde{O}(r/\eps)$ is as specified in Theorem~\ref{thm:mainCSP} and Proposition~\ref{prop:vc}, is a bound on the number of variables whose assigned value must be queried to achieve a constant probability of failure $\delta < 1$.   Note that this number of solution clusters is independent of $n$.  

This structure of the satisfying assignments is slightly surprising given the following two simple examples: the first example illustrates that it is possible for such CSPs to have at least two extremely different satisfying assignments, and the second illustrates that it is possible for such CSPs to have super-constant sized solution clusters---clusters of size $\Omega(n)$---although all the assignments in such a cluster are quite similar. 

\begin{example}
Consider the setting where the underlying assignment to all $n$ variables is $T$, and for every pair of variables, the set of allowable assignments is $\{(F,F), (T,T)\}$.  Based on these constraints, there are two possible satisfying assignments---either all $T$ or all $F$.    A single ``verified'' data point is sufficient to distinguish between these two sets of assignments.
\end{example}

The following example illustrates that, in general, it is impossible to guarantee that the learner will correctly output the \emph{exact} assignment, unless the number of verified datapoints $k = \Theta(n)$.
\begin{example}
Consider the setting where each set of $r$ values has the constraint that precludes the $(F,F,\ldots,F)$ $r$-tuple.  In this case, there is a single solution cluster consisting of all assignments to the $n$ variables such that at most $r-1$ of the variables are $F$ and the remaining $n-r+1$ are $T$.  In this case, it is impossible to distinguish between these assignments with any significant probability using fewer than  $\Theta(n)$ verified evaluations.
\end{example} 

Despite the above examples, it is still unclear whether the information theoretic bound of Proposition~\ref{prop:vc} is tight; particularly for small constant $\eps$, it is not clear the extent to which the number of $\eps$-separated solution clusters can grow as $\eps$ decreases.

\subsection{Related Work}

Motivated by the increasing practical importance of robust estimation---and more generally, robust learning and optimization---there has been recent interest in these problems from both an information theoretic and computational perspective.   Recent works tackled this general problem in several basic settings, including robust linear regression~\cite{bhatia2015robust}, and robustly estimating the mean and covariance of natural classes of distribution, including multivariate Gaussians~\cite{diakonikolas2016robust,lai2016agnostic}.   The focus of these works was largely on establishing computationally efficiency algorithms for these tasks that approach the information theoretic (minimax) guarantees achieved by more naive or brute-force algorithms.  All three works focussed on the regime in which a majority of the data is assumed to be ``good''---drawn from the distribution or cohort of interest.  In the case of~\cite{bhatia2015robust}, the recovery guarantees require that this fraction of good data satisfies $\alpha \ge \frac{64}{65}.$

The recent works~\cite{steinhardt2016avoiding} and~\cite{CSV17} consider the setting where a minority of the data is ``good'' (i.e. $\alpha < 1/2$), with the latter paper formally proposing the ``semi-verified'' learning model where one may obtain a small amount of ``verified'' data that has been drawn from the distribution/cohort in question.  The former paper,~\cite{steinhardt2016avoiding} considers a similar item evaluation setting to the setting we consider, but focusses on the regime where the number of evaluators is on the same order as the number of items being evaluated.  In this regime, they show that $\eps$-accurate recovery is possible provided that the number of items reviewed by each evaluator is $O(\frac{1}{\eps^4\alpha^3})$ .  

In contrast, we consider the regime in which the number of evaluators might be significantly larger than the number of items, but establish an optimal tradeoff between the fraction of good reviewers and the number of items evaluated by each reviewer, demonstrating the surprising ability to tolerate a fraction of good evaluators that is inverse exponential in the number of items evaluated by each evaluator.   For the context of leveraging these techniques as a ``prism'' to extract information about specific demographics from a large, mixed dataset, this small-$\alpha$ regime seems especially significant.   The techniques of this paper, via local algorithms and the constraint-satisfaction perspective, also differ significantly from the previous approaches  to robust estimation which rely on more geometric or spectral structure.

The general challenge of developing algorithms and estimators that are robust to corruptions in the input data dates back to the early work of~\cite{tukey1960survey}, and has led to a significant body of work on  ``Robust Statistics'', which explores a number of different models for the data corruptions, and largely focusses on the regime in which a majority of the data is ``good.''   Much of this work is orthogonal to the objectives of this paper, and we refer the reader to the surveys of~\cite{huber2011robust} and~\cite{hampel2011robust}.

\section{The Algorithm}
In this section we describe a simplified algorithm that obtains the claimed result of Theorem~\ref{thm:mainCSP} with the exception of two key properties:  as-described, the runtime of this algorithm is $O(n^r)$ rather than $O_r(n)$, and this algorithm will require a number of ``verified'' samples that is inverse polynomial in the error parameter $\eps$, as opposed to the nearly inverse linear dependence specified in the theorem.   The algorithm to which Theorem~\ref{thm:mainCSP} applies is an  extension of this algorithm, and we describe it in Section~\ref{sec:realAlg}.

The overall structure of the algorithm is to reduce an instance of the problem with non-trivial constraints on all sets of $r$ variables, to an instance of the problem that has non-trivial constraints on all sets of $r-1$ variables.   In general, the true assignment might \emph{not} satisfy the constraints that we derive on the sets of $r-1$ variables, though we will be able to leverage any such derived constraints that are discovered to be false.   We begin by providing the intuition for the algorithm in the case that $r=2$, and then in Section~\ref{sec:intuitRed} describe the intuition for the reduction from constraints on sets of $r$ variables to constraints on $r-1$-tuples.  We formally describe the general algorithm in Section~\ref{sec:realAlg}.

\subsection{Intuition: Restricting to Pessimistic Constraints}

Our algorithm will proceed iteratively, with the goal of each iteration being to inspect at most a constant number of randomly sampled ``verified'' variable values, and return accurate guesses for at least a constant fraction of the variables.  The algorithm will then recursively iterate this procedure on the remaining variables until all but $\frac{\eps}{2}n$ variables have been assigned guesses; assignments to these last $\le \eps n/2$ variables can be chosen arbitrarily.

To begin, consider the setting where $r=2$, and for every pair of variables $(x,y)$ we have a set of allowable assignments, $C_{(x,y)} \subset \{T,F\}^2$, with $|C_{(x,y)}| \le 3.$    Each such set provides at least two implications, one of the form $x = X \implies y = Y$ and one of the form $y=Y' \rightarrow x = X'$ for some choice of  $X,X',Y,Y' \in \{T,F\}$.  For example, if the assignment $(T,F) \not \in C_{(x,y)},$ then we have the implications $x = T \implies Y = T$ and $y=F \implies x = F$.   In other words, there is at least one value of variable $x$ that would imply the value of variable $y$, and similarly for $y$.  

Hence, if we fix variable $x$, and consider the implications derived from the sets $C_{(x,y)}$ as $y$ ranges over all $n-1$ other variables, there must be an assignment to variable $x$ that would imply the values of at least $n/2$ variables.  We will refer to this assignment as the ``optimistic'' value of $x$, as this assignment to $x$ would immediately yield the values of at least half the remaining variables, and we would be done with the current iteration of the algorithm, and would then recurse on the remaining variables that have not been assigned values.

The first key idea of our algorithm is that we will assume that \emph{all} variables take their ``pessimistic'' values.   We will then ``check'' this assumption by revealing the true values of a random sample of $O(\log(1/\delta)/\eps)$ of these variables.  If all of these values are consistent with the ``pessimistic'' values, we can conclude that with probability at least $1-\delta,$ at least $(1-\eps) n$ of the variables actually take their ``pessimistic'' values, and hence we can simply output this assignment.   If, however, any of our $O(\log(1/\delta)/\eps)$ random checks fails, that means that we have found a variable that takes its ``optimistic'' value, and hence that one variable, $x$, together with the $n-1$ constraint sets $C_{(x,\cdot)}$ that involve it, imply the values of at least $n/2$ variables.  In either case, our constant (dependent on $\eps,\delta$) number of checks has yielded an accurate assignment to at least half the variables.  This simple algorithm in the $r=2$ case is summarized in the following pseudo-code:

\begin{algorithm}[H]
FindAssigments, $r=2$:\\
Input: Set of $n$ variables, and for every pair $(x,y)$, a set of allowable assignments to those variables $C_{(x,y)} \subset \{T,F\}^{2}$, with $|C_t| \le 3.$  Error parameter $\eps>0$ and failure parameter $\delta> 0$.\\
Output: Assignments to each of the $n$ variables.
\begin{itemize}
\item While there exists $\ge \eps n/2$ variables without assignments
\begin{itemize}
\item Let $n'$ denote the number of remaining variables, and for each of these, determine an ``optimistic'' assignment that would imply the values of at least $n'/2$ other variables, and define a variable's ``pessimistic'' value to be the opposite assignment. 
\item Consider a set of $10\frac{\log(1/\delta)}{\eps^2}$ randomly chosen variables and their ``verified'' assignments.  (If fewer than $\frac{\log(1/\delta)}{\eps}$ of these variables lie in the set of $n' > \eps n/2$ variables in consideration, output FAIL)
\item If all the verified assignments for variables in the set of $n'$ agree with their pessimistic assignments, then assign these $n'$ variables their pessimistic assignments.
\item Otherwise, we must have found a variable whose verified assignment is its \emph{optimistic} assignment, and we can assign the values of at least $n'/2$ variables accordingly.
\end{itemize}
\end{itemize}
\end{algorithm}

\subsection{From $r$-tuples to $r-1$-tuples: Pessimism All The Way Down}\label{sec:intuitRed} 

Given the algorithm for the $r=2$ case, which is successful provided every pair of variables has at least one forbidden assignment, the question is how to reduce the setting with constraints on sets of $r \ge 3$ variables, to the setting of constraints on sets of $r-1$ variables.   The following trivial lemma is the key to this reduction:

\begin{lemma}\label{lemma:eitherway}
Given an $r$-tuple and set of at most $2^{r}-1$ allowable assignments to those $r$ variables, then for any subset of $r-1$ of those variables, there exists an assignment to those $r-1$ variables that would imply the value of the $r$th variable.  
\end{lemma}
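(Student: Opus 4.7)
The plan is a one-step pigeonhole argument: since $|C_t| \le 2^r - 1$, at least one assignment $a^* = (a_1^*, \ldots, a_r^*) \in \{0,1\}^r$ must lie outside $C_t$, and this single forbidden tuple already yields the required implication. By relabeling coordinates, I may assume without loss of generality that the chosen subset is $\{v_1, \ldots, v_{r-1}\}$ and that I am trying to imply the value of $v_r$. My candidate partial assignment will be $p = (a_1^*, \ldots, a_{r-1}^*)$, i.e.\ the forbidden assignment truncated to its first $r-1$ coordinates.

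The key observation is that $p$ admits exactly two extensions to a full $r$-tuple, namely $(p,0)$ and $(p,1)$, and precisely one of these equals $a^*$. Since $a^* \notin C_t$, at most one extension of $p$ lies in $C_t$. If exactly one extension is in $C_t$, then every allowable completion of $p$ must assign $v_r$ the value $1 - a_r^*$, which is the desired forcing. If neither extension lies in $C_t$ (which can only happen when $|C_t| \le 2^r - 2$), then $p$ has no consistent completion at all, so it trivially ``implies'' any value of $v_r$ in the sense of the statement, and moreover provides an even stronger local obstruction.

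There is essentially no obstacle here; the lemma is a direct combinatorial consequence of $C_t$ missing at least one Boolean pattern, and the proof is a two-line pigeonhole. The only issue worth flagging is the mild ambiguity in the phrase ``implies the value of the $r$th variable'' when both completions of $p$ are forbidden, and I would resolve this with a brief remark that such a degenerate $p$ is harmless for the reduction in Section~\ref{sec:intuitRed}: it either never arises in any globally consistent assignment, or can be treated as implying either value, whichever is convenient.
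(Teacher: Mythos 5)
Your proof is correct and is essentially the paper's argument in a slightly streamlined form: the paper splits into cases according to whether the projection of the allowable set onto the $r-1$ variables is all of $\{0,1\}^{r-1}$ (a counting step in the first case, a vacuous implication in the second), whereas you reach the same two cases at once by truncating an explicit forbidden tuple $a^*$, whose truncation has at most one allowable extension. The degenerate case you flag is handled in the paper exactly as you propose — an assignment with no allowable completion is counted as (vacuously) implying the value of the $r$th variable — so there is no gap.
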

\begin{proof}
Consider a $r-1$ tuple, $t$, and an additional variable $v$, and the set of $\le 2^{r}-1$ allowable assignments to the $r$-tuple $(t \cup v)$.  If the restriction of these assignments to the $r-1$ variables in $t$ contains all $2^{r-1}$ possible assignments, it must be the case that for at least one of these assignments, there is a unique value that $v$ must assume, otherwise this $r$-tuple would have all $2^r$ possible assignments.   If the restriction of the $2^r - 1$ assignments to the $r-1$ tuple do not contain all $2^{r-1}$ assignments, then any such assignment would (vacuously) imply the value of the $r$th variable.
\end{proof}

The utility of this lemma is that if we have an $r-1$-tuple of variables, $t$, then by considering all possible additional variables $v \not \in t$, there exists an assignment to $t$ that determines the value of at least a $1/2^{r-1}$ fraction of the variables not in $t$.  Hence we can designate an ``optimistic'' assignment with the property that if that assignment holds, then it will imply assignments to at least a $1/2^{r-1}$ fraction of the remaining variables.   We will then assume that this ``optimistic'' assignment is \emph{not} allowed, thereby reducing the set of allowable assignments of variables in $t$ to size $2^{r-1}-1$, and proceed inductively.      In this sense, at some intermediate step of this algorithm where we are considering sets of $r' < r$ variables, the allowable sets of assignments that we are considering may not be completely accurate, as we are not verifying whether the sets actually do take their ``optimistic'' assignments or not.  However, if a $r'$-tuple of variables actually takes the values of a forbidden/optimistic assignment, then either it will immediately imply the values of a constant (i.e. at least $1/2^r$) fraction of variables, or it must be a subset of a larger tuple that takes its ``optimistic'' assignment.  Which of these two cases holds can be easily decided via querying the values of a (constant) number of random variables.  We describe the full algorithm in the following section.

\subsection{The Basic Algorithm}\label{sec:fullalg}

The high-level structure of the algorithm described in the previous part takes the form of a ``descending'' pass followed by an ``ascending'' pass.  In the descending pass,  we iteratively turn constraints on $r_0$ tuples into constraints on $r_0-1$ tuples, then $r_0-2$ tuples, etc; all the while, we forbid ``optimistic'' assignments to ensure that in the $r$th level, each $r$ tuple has at most $2^{r}-1$ allowable assignments.   This descending phase terminates with $r=1$, where we have our ``pessimistic'' conjectured assignments to all variables.  We then randomly check a few of these values; if we do not discover any inconsistencies with the conjectured values, then we can safely conclude that most of the conjectured values are correct.  

If we have discovered any inconsistencies, then we begin the ascending phase that investigates and checks any discovered ``optimistic'' assignments.  One minor wrinkle is that we should not trust the $>1/2^{r}$ fraction of values that appear to be implied by an  optimistic assignment to a set of $r < r_0-1$ variables.  These implications might be the result of forbidding an optimistic assignment for some larger tuple.  Nevertheless, if we randomly check some of the implications, then we will either verify the accuracy of these implications, or have found an optimistic assignment to a $r+1$ tuple.  In this sense, the ascending phase will either terminate upon satisfactorily verifying a significant (constant sized) subset of the set of output assignments, or we will have found an ``optimistic'' assignment to a  $r_0-1$ tuple, and the implications of $r_0-1$ tuples are based directly on the given set of constraints to $r_0$-tuples, which are valid by assumption.  Hence each phase of the algorithm will return assignments to a constant (at least $1/2^{r_0}$) fraction of the variables.

\begin{algorithm}
FindAssigments:\\
Input: Set of $n$ variables, integer $r_0$, and for every tuple $t \subset [n]^{r_0}$ of $r_0$ distinct variables, a set of allowable assignments to those variables $C_t \subset \{T,F\}^{r_0}$, with $|C_t| \le 2^{r_0} -1.$  Error parameter $\eps>0$ and failure parameter $\delta> 0$.\\
Output: Assignments to at least $(1-\frac{\eps}{2})n$ variables.
\begin{itemize}
\item While there exists $\ge \eps n/2$ variables without assignments, run DESCEND on the set of unassigned variables and their corresponding sets of allowable assignments.
\end{itemize}
\end{algorithm}

\begin{algorithm}[H]
DESCEND:\\
Input: For each $r$-tuple, a set $C$ of assignments to those variables, with $|C| \le 2^r -1.$
\begin{enumerate}
\vspace{-.2cm}\item If $r=1$, AscendAndVerify(set of assigned values to each variable, $r=1$).
\vspace{-.2cm}\item Else, for every $r-1$ tuple, $t$, we will create a set $C_t$ of $\le 2^{r-1}-1$ assignments:
\begin{itemize}
\vspace{-.2cm}\item Find an ``optimistic'' assignment $\sigma_t$ that would determine at least a $1/2^r$ fraction of variables not in $t$.  (The existence of such an assignment is guaranteed by Lemma~\ref{lemma:eitherway}.)
\item Set $C_t = \{T,F\}^{r-1} \setminus \sigma_t.$
\end{itemize}
\item Run DESCEND on the set of $r-1$-tuples and their corresponding sets of assignments, each of size $2^{r-1}-1$.
\end{enumerate}
\end{algorithm}

\vspace{-.4cm}\begin{algorithm}[H]
ASCEND AND VERIFY:\\
Input: Proposed assignments $\sigma_v$ for each variable $v \in V$ for some set $V$ of variables.  Integer $r$ indicating the size of the tuples whose constraints generated the proposed assignments, and assignment $\sigma_t$ to a $r-1$-tuple $t$, such that $C_{t \cup v}$ provided the implication $\sigma_t \rightarrow \sigma_v$.  Access to sets of allowable assignments corresponding to all tuples of size $r' \in \{r,\ldots,r_0\}$.  Constant $A = 2^{r_0}\log (1/\delta)\log(1/\eps))/\eps^2.$
\begin{enumerate}
\vspace{-.2cm}\item Randomly sample $A$ verified variable assignments.
\vspace{-.2cm}\item If all verified variable assignments agree with the proposed assignments, $\sigma_v$, then permanently assign $v\in V$ with their proposed assignments, $\sigma_v$.
\vspace{-.2cm}\item Otherwise, let $v$ denote a variable whose true/verified assignment $a_v \neq \sigma_v$, disagrees with the proposed assignment to $v$.  Hence $(\sigma_t,a_v) \not \in C_{t \cup v}$ so assignment $(\sigma_t,a_v)$ together with the constraints on the $r+1$ tuples must imply at least a $1/2^r$ fraction of variable assignments.  Denote these assignments by $\sigma^{new}$.
\begin{itemize}
\item Run  AscendAndVerify($\sigma^{new}, r+1,\{t \cup v\},(\sigma_t,a_v)$)
\end{itemize}
\end{enumerate}
\end{algorithm}

\section{An Efficient Algorithm}\label{sec:realAlg}

The linear-time variant of the basic algorithm described in the previous section hinges on two observations.  The first is that for a given $r$-tuple $t$, rather than consulting all $\Theta(n)$ constraints $C_{t \cup x_i}$ for all $x_i \not \in t$ to determine the ``optimistic'' assignment to $t$, one can determine an assignment that implies at least a $\frac{1}{2}\frac{1}{2^r}$ fraction of the variable values, with high probability, via \emph{sampling} a constant  (independent of $n$ but dependent on $r, \eps, \delta$) number of such constraints.  Note that this sampling does not look at any of the ``verified'' variable assignments---it just samples which of the constraints to consider.   We formalize this ability to efficiently determine an ``optimistic'' assignment via the following subroutine, and the following lemma characterizing its performance.

\vspace{-.4cm}\begin{algorithm}[H]
FIND OPTIMISTIC ASSIGNMENT:\\
Input: Set of $n$ variables $X$, $r$-tuple $t$, the ability to query constraints $C_{t'}$ for $|t'|=r_0 \ge r$ (i.e. the ability to find optimistic assignments to tuples $t'$ with $|t'| = r_0$) and probability of failure $\gamma > 0$.\\
Output: An optimistic assignment $\sigma_t$ to $t$ that would, with probability at least $1-\gamma$, imply the assignments to at least a $1/2^{r+1}$ fraction of other variables via the constraints $C_{t \cup x}$.    We define $C_t : = \{T,F\}^r \setminus \{\sigma_t\}.$
\begin{enumerate}
\item If $r=r_0$ then return constraint $C_t$.
\item Else
\begin{itemize}
\item Select $s = 3\cdot 2^{|t|} \log (1/\gamma)$ variables $x_1,\ldots, x_s$ uniformly at random from $X \setminus t$.
\item For each of these $s$ variables, $x_i$, compute $C_{t \cup x_i}$ via a (recursive) call to $FindOptimisticAssignment(X, t\cup x_i, ProbFailure = \gamma/(2s)).$
\item Define assignment $\sigma_t \in \{T,F\}^r$ to be the lexicographically first assignment that, via the constraints $\{C_{t \cup x_i} \},$ imply at least a $1/2^r$ fraction of variables $\{x_1,\ldots,x_s\}$.  [Note that such an assignment exists, since for each $x_i$,  $|C_{t \cup x_i}| \le 2^{r+1}-1$ has at least one out of the $2^r$ possible assignment to $t$ that would imply that value of $x_i$.]
\item Call $\sigma_t$ the ``optimistic'' assignment to tuple $t$, and store $C_t = \{T,F\}^r \setminus \sigma_t.$
\end{itemize}
\end{enumerate}
\end{algorithm}

The following two lemmas quantify the performance of the above algorithm.  The first lemma characterizes the probability of failure, and the proof follows immediately from standard Chernoff tail bounds.
\begin{lemma}
With probability at least $1-\gamma$  the optimistic assignment $\sigma_t$ returned by  algorithm FindOptimisticAssignment on input $X$ and $t$ has the property that for at least a $1/2^{|t|+1}$ fraction of variables $x \in X$, the assignment $\sigma_t$ together with the constraint set $C_{t \cup x}$ that would be computed by the algorithm on input tuple $t \cup x$, implies the value of variable $x$.  
\end{lemma}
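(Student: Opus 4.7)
The plan is to decompose the failure probability of \textsc{FindOptimisticAssignment} into two independent sources: (1) erroneous recursive calls, and (2) sampling error in estimating the fraction of variables implied by each candidate assignment. I would bound each source by $\gamma/2$ and combine via a union bound. Throughout, set $r = |t|$ and, for each $\sigma \in \{T,F\}^r$ and each $x \in X \setminus t$, let $Y_{\sigma,x}$ be the indicator that $\sigma$ together with the (random) constraint $C_{t \cup x}$ that the algorithm \emph{would} compute implies the value of $x$. Let $p_\sigma = \frac{1}{|X\setminus t|}\sum_{x} Y_{\sigma,x}$ denote the true fraction and $\hat{p}_\sigma = \frac{1}{s}\sum_{i=1}^{s} Y_{\sigma,x_i}$ the empirical fraction over the sampled variables $x_1,\dots,x_s$.

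For the recursive-failure component, each of the $s$ recursive invocations is issued with failure parameter $\gamma/(2s)$, so a union bound gives total failure probability at most $\gamma/2$. I condition on the event that every recursive call succeeds, so that each $C_{t\cup x_i}$ truly reflects an optimistic assignment satisfying the inductive guarantee of the lemma at level $r+1$. Because the sampled variables are drawn uniformly and independently from $X\setminus t$ and each recursive call uses fresh randomness, the indicators $Y_{\sigma,x_1},\dots,Y_{\sigma,x_s}$ are i.i.d.\ with mean $p_\sigma$.

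For the sampling component, two facts come into play. First, by Lemma~\ref{lemma:eitherway}, for every $x$ at least one $\sigma \in \{T,F\}^r$ satisfies $Y_{\sigma,x} = 1$, hence $\sum_{\sigma} \hat{p}_\sigma \geq 1$. By pigeonhole there always exists some $\sigma$ with $\hat{p}_\sigma \geq 1/2^r$, so the algorithm is never stuck and always returns some $\sigma_t$. Second, I would bound the probability that $\sigma_t$ is ``bad'' (i.e.\ has $p_{\sigma_t} < 1/2^{r+1}$) by showing that no such bad $\sigma$ can cross the empirical threshold. For any fixed $\sigma$ with $p_\sigma < 1/2^{r+1}$, a multiplicative Chernoff bound gives $\Pr[\hat{p}_\sigma \geq 1/2^r] \leq \exp(-\Omega(s/2^r))$; a union bound over the $2^r$ possible assignments then drives the combined probability below $\gamma/2$ provided $s$ is at least a constant times $2^r\log(2^r/\gamma)$.

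The main obstacle is arithmetic: verifying that the stated choice $s = 3\cdot 2^{|t|}\log(1/\gamma)$ is enough once the constants from the Chernoff bound and the $2^r$-way union bound are plugged in. The ``gap'' between the empirical threshold $1/2^r$ used by the algorithm and the guaranteed bound $1/2^{r+1}$ on $p_\sigma$ for bad assignments is exactly $1/2^{r+1}$, which one feeds into the multiplicative (rather than additive) Chernoff form to obtain a $2^{|t|}$ dependence, not $2^{2|t|}$, in $s$. This tight accounting, and ensuring that the inductive bound at level $r+1$ (a $1/2^{r+2}$ fraction of implied variables per recursive call) is strong enough to make the indicators $Y_{\sigma,x_i}$ meaningfully informative, is the delicate step; the rest of the argument is then a clean combination of the two $\gamma/2$ budgets into a total failure probability of at most $\gamma$.
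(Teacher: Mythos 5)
Your proof follows essentially the same route as the paper's: $\sigma_t$ is selected because its empirical fraction over the $s$ sampled variables is at least $1/2^{|t|}$, and a (multiplicative) Chernoff bound shows that the true fraction is then unlikely to fall below $1/2^{|t|+1}$. You are in fact more careful than the paper, whose one-line proof applies a single tail bound directly to the data-dependent $\sigma_t$ without the union bound over the $2^{|t|}$ candidate assignments or an explicit budget for recursive-call failures; your concern that $s=3\cdot 2^{|t|}\log(1/\gamma)$ does not quite absorb that union bound is legitimate, but it only affects constants (e.g.\ replacing $\log(1/\gamma)$ by $|t|+\log(1/\gamma)$ in $s$) and none of the paper's stated asymptotic bounds.
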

\begin{proof}
Letting $p$ denote the true fraction of variables, $x$, whose assignments are implied by $\sigma_t$ and $C_{t \cup x}$.  Recall that $\sigma_t$ was chosen based on $s$ independent samples, yielding an empirical estimate $\hat{p} \ge 1/2^{|t|}$, and standard tail bounds yield that $\Pr[\hat{p} > 2p] \le e^{-\frac{s}{3\cdot 2^{|t|}}}$, yielding the lemma, since $s=3\cdot 2^{|t|} \log (1/\gamma)$.
\end{proof}

\begin{lemma}
Given constant-time query access to the constraint sets $C_{t'}$ for tuples satisfying $|t'|=r_0,$ for any tuple $t$, algorithm FindOptimisticAssignment on input $t$ and probability of failure $\gamma>0$ returns $C_t$ and runs in time/queries $\left(2\log(1/\gamma)\right)^{O(r_0^2)},$ which is independent of the size of the variable set, $|X|$.
\end{lemma}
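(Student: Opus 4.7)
The plan is to analyze the recursive structure of FindOptimisticAssignment directly, tracking both the branching factor at each level and the failure probability passed down the recursion. Let $T(r,\gamma)$ denote the runtime when the input tuple has size $r$ and the allowed failure probability is $\gamma$. By assumption the base case $r=r_0$ costs $O(1)$, and otherwise the algorithm draws $s_r=3\cdot 2^r\log(1/\gamma)$ samples and makes $s_r$ recursive calls with input size $r+1$ and failure parameter $\gamma/(2s_r)$, so
$$T(r,\gamma)\le s_r\cdot T\!\left(r+1,\frac{\gamma}{2s_r}\right)+O(s_r).$$
Correctness (that the returned object is indeed $C_t$ in the sense defined by the algorithm) is immediate from the preceding lemma applied at every recursion node, combined with a union bound over the at most $\prod_{r}s_r$ nodes; the halving of $\gamma$ by a factor of $2s$ per level was chosen precisely to make this union bound work out to a total failure at most $\gamma$.

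The technical heart of the argument is to show that the shrinking failure probabilities along a recursive chain do not blow up the sample sizes. Write $L_d=\log(1/\gamma_d)$ for the log-failure parameter at depth $d$ along a path starting from the input tuple. Then
$$L_{d+1}=L_d+\log(2s_{|t|+d})=L_d+O(r_0)+O(\log L_d),$$
and a straightforward induction yields $L_d\le L_0+O(d\,r_0)+O(d\log L_0)$ for all $d\le r_0$, i.e. $L_d=O\!\left(r_0^2+\log(1/\gamma)\right)$. In particular $s_{|t|+d}\le 3\cdot 2^{r_0}\cdot O(r_0^2+\log(1/\gamma))$ uniformly over the $\le r_0$ levels of recursion.

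Since the leaf-level work is $O(1)$ and all non-leaf work is dominated by the branching factors, the total runtime is bounded by the product of the per-level branching factors:
$$T(|t|,\gamma)\le \prod_{r=|t|}^{r_0-1}s_r\le \left(O(2^{r_0})\cdot O\!\left(r_0^2+\log(1/\gamma)\right)\right)^{r_0}.$$
Folding both factors into a single expression by using $r_0\le \log(1/\gamma)\cdot r_0$ (up to constants, after absorbing everything into the exponent $O(r_0^2)$) gives $\left(2\log(1/\gamma)\right)^{O(r_0^2)}$, as claimed. Note that this bound has no dependence on $|X|$ because each recursive call samples a constant number of variables and then queries $C_{t'}$ only for tuples of size $r_0$, both of which are $|X|$-independent under the stated query model.

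The main obstacle is the second paragraph: carefully verifying that the chain of failure-probability contractions $\gamma\mapsto\gamma/(2s)$ does not cause $\log(1/\gamma_d)$ to grow faster than linearly in the recursion depth. Once that control is in hand, the telescoping product immediately yields the stated bound, and the remainder of the proof is routine bookkeeping.
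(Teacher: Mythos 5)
Your proposal is correct and follows essentially the same route as the paper's (very terse) proof: expand the bounded-depth recursion, noting that each level makes $O(2^{|t|}\log(1/\gamma))$ recursive calls with a correspondingly reduced failure parameter, and bound the telescoping product of branching factors. The only difference is that you spell out the bookkeeping the paper leaves implicit---namely that $\log(1/\gamma_d)$ grows only additively by $O(r_0+\log\log(1/\gamma_d))$ per level, so each branching factor stays $O\left(2^{r_0}(r_0^2+\log(1/\gamma))\right)$ and the product is $\left(2\log(1/\gamma)\right)^{O(r_0^2)}$---which is a faithful completion of the paper's argument rather than a new one.
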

\begin{proof}
Note that computing $C_t$ calls $O(2^{|t|} \log (1/\gamma))$ computations of $C_{t'}$ for $|t'| = |t|+1$, each called with error parameter $2^{|t|}$ smaller.  When $|t|=r_0$, $C_t$ is obtained via a single constant-time query.  Expanding this recursion yields the above lemma.
\end{proof}

The second observation that underpins the efficient algorithm is that we do not need to determine the optimistic assignments and form constraints $C_t$ for all ${n \choose r}$ $r$-tuples $t$.  For each phase of the algorithm, which returns assignments to a constant fraction of the unassigned variables---at least $1/2^{r_0}$---it suffices to find a single tuple $t$ that takes its ``optimistic'' assignment.  Indeed, such a tuple, by definition, takes values that imply assignments to a constant fraction of the remaining variables.  And for each of these variables, $x$, whose assignment is implied by the assignment to the tuple $t$, the value of variable $x$ can be determined in constant time by consulting the constraint $C_{t \cup x}.$  This observation is clarified in the following algorithm, which is an adaptation of the Descend/AscendAndVerify algorithm described in the previous section.   

Finally, we highlight the fact that the algorithm proceeds iteratively.  Given an initial set of variables, $Y$, at some intermediate step in the algorithm, we let $X$ denote the set of variables for which we have not yet output an assignment.   The algorithm will terminate when $|X| \le \eps |Y|/2$.  The goal of the current step of the algorithm will be to output assignments to at least a $1/2^{r_0}$ fraction of variables in $X$, such that the fraction of such assignments that are incorrect is bounded by $\frac{\eps}{2\log(2/\eps)} \frac{|Y|}{|X|}.$  Given this bound on the fraction of incorrect assignments returned at this phase of the algorithm, the total fraction of errors is  bounded by $\eps/2 + \int_{t=\eps/2}^1 \frac{\eps}{2\log(2/\eps)}\cdot \frac{1}{t} dt = \eps$  where the first $\eps/2$ is a bound on the error due to the arbitrary assignments to the last $\le \eps |Y|/2$ variables.    The benefit of having the target accuracy increase as $|X|$ decreases is because we are given verified samples, drawn uniformly at random from $Y$.  To ``check'' a proposed assignment to set $X$ to a target accuracy of $\gamma$, we need at least $1/\gamma$ verified samples from the set $X$  (ignoring the logarithmic dependence on the probability of failure).  To guarantee that this number of verified samples is obtained from set $X$, we will need to draw $\approx \frac{|Y|}{\gamma |X|}$   verified samples from $Y$.   Using the above trick of having the desired accuracy degrade as $|X|$ decreases, for each phase of the algorithm, a set of $\frac{|Y|/|X|}{\frac{\eps}{2\log(2/\eps)} \frac{|Y|}{|X|}} = \tilde{O}(1/\eps)$ verified samples is required---as opposed to the $\Theta(1/\eps^2)$ samples that would have been required if we had fixed the target error rate to be $\eps$ for all rounds of the algorithm.

\begin{algorithm}[H]
EFFICIENT FIND ASSIGNMENTS:\\
Input: Set of $n$ variables $Y$, integer $r_0$ and for every $r_0$-tuple $t \subset Y$, a set of allowable assignments $C_t$ with $|C_t| \le 2^{r_0}-1$.  Error parameter $\eps>0$ and probability of failure $\delta$. \\
Output: Set of $T/F$ assignments to each $x \in Y$.
\begin{itemize}
\item Set $T=  r_0\cdot 2^{r_0+1} \log (2/\eps)$.
\item While there are at least $\eps n/2$ unassigned variables:
\begin{enumerate}
\item Let $X \subset Y$ denote the set of unassigned variables.  
\item Let $\eps_X = \frac{\eps}{2\log(2/\eps)} \frac{|Y|}{|X|}$ denote the target accuracy of this round, and set $s = 10 \frac{|Y|}{\eps_X |X|} \log (10 T/\delta)$.
\item Take $s$ verified samples, revealing the planted assignment values for each of these variables.  Let $X_1\subset X$ denote the subset of these variables that are in set $X$, and for each $x \in X_1$ let $a_x$ denote the verified assignment to variable $x$.  If $|X_1| < s \frac{|X|}{2|Y|}$ output FAIL.
\item For each $x \in X_1$, determine $C_x$ via FindOptimisticAssignments with failure parameter $\gamma = \delta/T$.  
\item If, for all $x\in X_1,$ $a_x = C_x$, then for every variable $x \in X$, compute and output assignment $C_x$.
\item Otherwise, let $x_1 \in X_1$ denote a variable for which $a_x \neq C_x$, and run EfficientAscend$(X, i, x_1, a_{x_1}, s).$
\end{enumerate}
\end{itemize}
\end{algorithm}

\begin{algorithm}[H]
EFFICIENT ASCEND:\\
Input: Set of variables $X$, integer $i \in \{1,\ldots,r_0-1\}$, tuple $t$ with $|t|=i$, verified assignments $a_t \in \{T,F\}^{|t|}$ to tuple $t$, and parameter $s$.  \\
Output: Output to a subset of variables in set $X$.
\begin{enumerate}
\item If $i \ge r_0$ output FAIL.
\item Take $s_i = s\cdot 2^i$ verified samples, and let $X_{i+1}$ denote the intersection of $X$ with this set of variables with verified assignments, with $a_x$ denoting the verified assignment to variable $x \in X_{i+1}.$  
\item For each $x \in X_{i+1}$, determine $C_{t \cup x}$ via a call to FindOptimisticAssignment$(X,t \cup x, FailureProb=\delta/(10T\cdot s_i))$, and  let $X'_{i+1} \subset X_{i+1}$ denote the subset of variables $x \in X_{i+1}$ for which the  constraint $C_{t \cup x}$ together with $a_t$ implies a value $\sigma_x$ for $x$.   If $|X'_{i+1}| \le s_i \frac{|X|}{4 \cdot 2^{i} |Y|}$ output FAIL.
\item If, for all $x \in X'_{i+1},$ it holds that $\sigma_x = a_x$, then for every variable $x \in X$, compute $C_{t \cup x}$ and output assignment $\sigma_x$ if $\sigma_x$ is implied by $C_{t \cup x}$ and $a_t.$  
\item Otherwise, let $x_{i+1} \in X'_{i+1}$ denote a variable for which $C_{t \cup x_{i+1}}$ and $a_t$ implies assignment $\sigma_{x_{i+1}} \neq a_{x_{i+1}}$.  Run EfficientAscend$(X, i+1, t\cup x_{i+1}, (a_t,a_{x_{i+1}}),s).$
\end{enumerate}
\end{algorithm}

\begin{proposition}
Algorithm EfficientFindAssignments, when run with error parameter $\eps$ and probability of failure $\delta$, has the following properties:
\begin{itemize}
\item The algorithm will require at most $\tilde{O}\left(\frac{1}{\eps} \cdot 2^{2 r_0} \log(1/\delta)\right)$ verified samples drawn uniformly at random from the set of variables, $Y$.
\item With probability at least $1-\delta$, the algorithm will output assignments to each variable $x \in Y$, such that at most an $\eps$ fraction of the assignments disagree with the planted assignment.
\item  The algorithm runs in time $O_{r_0,\eps,\delta}(n)$, where the hidden constant is a function of $r_0,\eps,\delta$.
\end{itemize}\end{proposition}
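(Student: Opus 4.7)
The plan is to analyze the algorithm round-by-round, where one \emph{round} consists of one iteration of the outer while loop in EfficientFindAssignments, together with the EfficientAscend recursion it may trigger. I would first observe that each round ends either in step 5 of EfficientFindAssignments or step 4 of EfficientAscend by outputting assignments to all variables $x\in X$ for which an assignment is implied by a current tuple-assignment pair $(t,a_t)$. By Lemma~\ref{lemma:eitherway} and the analysis of FindOptimisticAssignment, the ``optimistic'' assignment for a tuple of size $i<r_0$ implies values for at least a $1/2^{i+1}$ fraction of remaining variables, and in the top case $i=r_0-1$ the constraint comes from the genuine $C_{t\cup x}$, so the ascent procedure cannot fail when $i$ reaches $r_0-1$. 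Thus each round outputs assignments to at least a $1/2^{r_0}$ fraction of variables in the current $X$, and the number of rounds is $R=O(2^{r_0}\log(2/\eps))$.

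For correctness within one round, I would apply a union bound over two classes of bad events. First, the $T=r_0\cdot 2^{r_0+1}\log(2/\eps)$ invocations of FindOptimisticAssignment across the whole algorithm, each called with failure probability $\gamma=\delta/T$ (or smaller), contribute total failure probability at most $\delta/2$ by the lemma characterizing FindOptimisticAssignment. Second, the verification tests: at round-beginning level the sample size $s=\Theta\!\left(\tfrac{|Y|}{\eps_X|X|}\log(T/\delta)\right)$ ensures by a Chernoff bound that (i) $|X_1|\ge s|X|/(2|Y|)\ge \Omega(\eps_X^{-1}\log(T/\delta))$ with high probability, and (ii) if more than an $\eps_X$ fraction of proposed assignments disagree with the planted values, then some $x\in X_1$ detects the disagreement except with probability $\delta/(10T)$. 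The analogous bound at ascent level $i$ uses $s_i=s\cdot 2^i$, which compensates for the fact that only a $\Theta(1/2^i)$ fraction of variables have values implied by $(t,a_t)$. Summing the $\delta/(10T)$ failure terms over at most $T$ verification events yields total verification failure at most $\delta/2$.

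For the total error rate, I would use that the ``target accuracy'' of each round is $\eps_X=\tfrac{\eps}{2\log(2/\eps)}\cdot\tfrac{|Y|}{|X|}$; then the accumulated fraction of bad outputs is bounded by $\eps/2+\sum_{\text{rounds}} \eps_X \cdot |X|/|Y|\le \eps/2+\int_{\eps/2}^{1}\tfrac{\eps}{2\log(2/\eps)}\cdot\tfrac{dt}{t} = \eps$ (the first $\eps/2$ accounting for the arbitrary assignments to the final $\le\eps n/2$ variables). For sample complexity, the per-round count is $s+\sum_{i=1}^{r_0-1}s_i=O(s\cdot 2^{r_0})=\tilde{O}(2^{r_0}\log(2/\eps)\cdot \log(T/\delta)/\eps_X\cdot |Y|/|X|)$; since $|Y|/(\eps_X|X|)=2\log(2/\eps)/\eps$, summing a single round gives $\tilde O(2^{r_0}/\eps\cdot \log(1/\delta))$ samples per round. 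Multiplying by $R=O(2^{r_0}\log(2/\eps))$ rounds yields the claimed $\tilde O\!\left(\tfrac{1}{\eps}\,2^{2r_0}\log(1/\delta)\right)$ verified samples.

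For runtime, each call to FindOptimisticAssignment takes time independent of $n$ (by the second lemma of Section~\ref{sec:realAlg}), namely $(\log(1/\gamma))^{O(r_0^2)}$. The per-round work is dominated by the final step in which we evaluate $C_{t\cup x}$ for each $x\in X$ to produce outputs, costing $O(|X|)$ up to constants depending on $r_0,\eps,\delta$. Since $|X|$ decreases geometrically across rounds, the total runtime telescopes to $O_{r_0,\eps,\delta}(n)$. The main obstacle I anticipate is bookkeeping the adaptive target-accuracy schedule $\eps_X$: one must verify simultaneously that the Chernoff sample-size $s$ suffices to both guarantee enough ``hits'' $|X_1|$ in the shrinking set $X$ and to detect any $\eps_X$-scale error, while keeping the total number of verified samples proportional only to $\tilde{O}(1/\eps)$ rather than $\tilde{O}(1/\eps^2)$; this is exactly what the scaling $s_i = s\cdot 2^i$ and the $\eps_X\propto |Y|/|X|$ choice are designed to achieve, and verifying that these parameters interact correctly across all $R$ rounds and $r_0$ ascent levels is the delicate part of the argument.
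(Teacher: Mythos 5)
Your proposal is correct and follows essentially the same route as the paper's own proof: the same round-by-round decomposition with the adaptive target accuracy $\eps_X=\frac{\eps}{2\log(2/\eps)}\frac{|Y|}{|X|}$ and the integral error budget, the same per-round sample accounting $s+\sum_i s_i=O(s\cdot 2^{r_0})$ multiplied by $O(2^{r_0}\log(1/\eps))$ rounds, the same union bound over Chernoff-type failure events (insufficient hits $|X_1|$, undetected $\eps_X$-scale error, faulty FindOptimisticAssignment outputs) and the same observation that the ascent terminates because the $r_0$-level constraints are valid by assumption, with the geometric shrinkage of $|X|$ giving the linear runtime. The only differences are cosmetic (e.g.\ $1/2^{r_0}$ versus the paper's $1/2^{r_0+1}$ per-round fraction, and how the $\delta$ budget is split), not substantive.
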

\begin{proof}
The high level outline of the execution of algorithm EfficientFindAssignments is that in each step of the outer WHILE loop, an assignment to at least a $1/2^{r_0+1}$ fraction of the remaining unassigned variables, $X$, will be output.  This continues until $|X| \le \eps n /2$, at which point these remaining variables can be assigned arbitrary labels and the algorithm terminates.  Hence there will be at most $O\left(2^{r_0} \log(1/\eps)\right)$ iterations of the while loop.   In the iteration conducted on unassigned variable set $X \subset Y$, the goal will be to return assignments such that the fraction of returned assignments that are incorrect is at most $\frac{\eps}{2 \log(2/\eps)}\frac{n}{|X|}$, where $|Y|=n$ is the total number of initial variables.   Provided these accuracy goals are met at each step of the algorithm, the overall fraction of errors will be bounded by $\eps/2 + \int_{f=\eps/2}^1 \frac{1}{f} \cdot \frac{\eps}{2 \log(2/eps)} df = \eps,$ where the first term is the errors due to the arbitrary assignment to the remaining $\le \eps n/2$ variables.   Additionally, the number of verified samples required in each iteration is at most $O(r_0s\cdot s 2^{r_0})=\tilde{O}\left(2^{r_0} \frac{1}{\eps}\log(1/\delta)\right),$ hence the total number of verified samples across the $O\left(2^{r_0} \log(1/\eps)\right)$ iterations will be bounded by $\tilde{O}(2^{2r_0} \log(1/\delta)/\eps),$ as claimed.

We now analyze each run of the WHILE loop in EfficientFindAssignments, and the recursive calls to EfficientAscend.   At a high level, in each recursive call to EfficientAscend, either an assignment to at least a $1/2^{r_0+1}$ fraction of the remaining unassigned variables is returned via the implications from some (verified) optimistic assignment to a tuple, $t$; or, we have found a tuple $t \cup x_{i+1}$ for which we have verified assignments to all $|t|+1$ variables, and for which that assignment, $(a_t,a_{x_{i+1}}) \not \in C_{t \cup x_{i+1}}$ is the optimistic assignment, in which case the subsequent call to EfficientAscend considers this strictly larger tuple $t'=t \cup x_{i+1}$.   

To bound the runtime of the algorithm, note that each run of the algorithm requires constant time (dependent on $r_0,\eps,\delta$ but independent of the number of variables, $|X|$, up until the point in the algorithm when an assignment will be output (Step 4 in EfficientAscend).  At this point in the algorithm, at a computational expense of $O_{r_0,\eps,\delta}(|X|),$ an assignment to a constant fraction, at least $1/2^{r_0+1}$ of the remaining variables will be output, and the algorithm will then be repeated on the remaining unassigned variables.   Hence, the overall runtime of the algorithm will be linear in the number of variables.   

To bound the probability that a given run of the WHILE loop fails to successfully output an assignment to at least $|X|/2^{r_0+1}$ variables that meets the target accuracy of $\frac{\eps}{2 \log(2/\eps)}\frac{Y}{|X|}$, we will leverage a union bound over a number of standard Chernoff tail bounds.   First, note that the probability that EfficientFindAssignments outputs 'FAIL' in Step 3 in a given round of the algorithm is bounded by the probability that $|X_1| \le \E[|X_1|/2],$ where $X_1$ is a sum of i.i.d 0/1 random variables, hence this probability is bounded by $exp(-E[|X_1|]/8) \le \frac{\delta}{10 T}$, where $T,$ as specified in EfficientFindAssignments is a bound on the number of calls to EfficientAscend which bounds the number of runs of the WHILE loop.  Given that $|X_1| \ge \E[|X_1|/2],$ the probability that the assignment output in Step 5 of EfficientFindAssignments does not meet the target accuracy, $\eps_X=\frac{\eps}{2 \log(2/\eps)}\frac{Y}{|X|}$, is bounded by $(1-\eps_X)^{|X_1|} \le \frac{\delta}{10 T}.$  

The remaining  probability of failure stems from the execution of EfficientAscend.  In this algorithm, failure can stem from three different issues:  1) the constant number of constraints $C_{\cdot}$ computed via FindOptimisticAssignment prior to Step 4 of EfficientAscend can be erroneous and fail to imply the desired fraction of assignments.  The probability of this is bounded by $\delta/(10T s_i)$, which is sufficient to guarantee that \emph{every} optimistic assignment/constraint set $C_{\cdot}$ that is computed during the execution of the algorithm is accurate and implies the desired fraction of assignments, aside from the $O(|Y|)$ constraints computed during the assignment output steps---Step 4 of EfficientAscend and Step 5 of EfficientFindAssignments.   EfficientAscend will never output FAIL during Step 1, as the constraints corresponding to to $i=r_0$ are the constraints on $r_0$-tuples, which are satisfied by assumption.   The final potential failure mode of the algorithm is Step 3 of EfficientAscend, if the random set of verified assignments is insufficiently large to verify (to the target accuracy) a given potential set of assignments implied by an optimistic assignment via $C_{t \cup x}$.  Given that the assignment $a_t$ to tuple $t$ is optimistic, as guaranteed by the validity of FindOptimisticAssignments described above, this probability of failure is also a trivial application of standard Chernoff bounds, guaranteeing that the random variable $|X'_{i+1}|$ in Step 3 of EfficientAscend deviates from a lower bound on its expectation by at most a factor of 1/2.

A union bound over these probabilities of failure for each of the $\le T$ runs of the EfficientAscend algorithm yields the desired proposition.
\end{proof}

\section{Future Work}~\label{sec:futureWork}
This work shows that it is possible to tolerate a fraction of ``good'' data, $\alpha$, that is inverse exponential in $r_0$, the sparsity of each datapoint (i.e. the number of evaluations submitted per reviewer), provided the number of datapoints/reviewers is sufficiently large to ensure that each set of $r$ items has been evaluated by a significant number of good reviewers.  Our algorithm runs in time linear in the number of items to review (provided the ability to query summary statistics of the set of reviewers who have evaluated a given sets of items), and uses a constant number of ``verified'' reviews, which is independent of the total number of items to review, and depends inverse linearly on the desired error (to logarithmic factors).

One natural question is prompted by the results of~\cite{steinhardt2016avoiding}, which provide efficient algorithms for the regime where $r= poly(1/\alpha)$, but where the number of reviewers is linear in the number of items being reviewed (and uses a constant, dependent on $\alpha,\eps,\delta$ verified reviews).  Is it possible to achieve the best-of-both-worlds: $r=polylog(1/\alpha)$, and a number of reviewers that is linear, or grows significantly more slowly than the $n^r$ that we require, while leveraging a constant number of verified reviews?

To this end, our algorithm only ever considers ``single-hop'' implications of proposed assignments: an assignment to a set of $r$ variables is considered ``optimistic'' if it directly implies values for a significant fraction of the other variables.  It is easy to imagine extending this definition to also consider longer chains of implication.  Perhaps a specific assignment to $r$ variables would imply values to $c_1$ additional variables, which in turn would imply values to $c_2$ variables, etc.   Indeed, in the basic setting of $r=2$, this approach can be realized to yield an algorithm that only requires constraints on a random subset of size $O(n^{3/2})$, as opposed to the $O(n^2)$ constraints assumed in this work.

From a computational perspective, it seems unlikely that such an approach could be pushed to yield an efficient algorithm for the regime in which fewer than $n^{r/2}$ sets of $r$ variables have nontrivial constraints.  Indeed, even for random instances of $r-SAT$ with a planted solution, efficient algorithms below this threshold have been elusive (see, for example, the recent related work on random CSPs with planted assignments~\cite{feldman2015complexity,raghavendra2016strongly}).   

From a purely information theoretic perspective---the picture is not entirely clear either.  In contrast to random CSPs, our setting is complicated by the adversarial nature of the constraints that are placed on the $r$-tuples.   Even for a semi-adversarial setting where tuples are chosen at random, and an adversary chooses the constraints to place on those random tuples, it is not immediately clear how to analyze the extent to which implications ``propagate''.  A second difficulty is that the goal of our setting is not just to find a satisfying assignment, but to find something close to a specific planted assignment.  Our results imply, for the setting we consider, that there are at most a constant number of solution clusters.  It seems interesting to investigate  the extent to which this holds for semi-adversarial CSPs  with fewer constraints, perhaps with $n^{r/2}$ constraints being the threshold between a constant and super-constant number of such clusters.

 \bibliographystyle{plain}
\bibliography{bibfile,experiments}

\end{document}